\documentclass{llncs}
\usepackage{xspace}
\usepackage[usenames,dvipsnames]{color}
\usepackage{hyperref}
\usepackage{longtable}
\usepackage{graphicx}
\usepackage{subcaption}
\usepackage[all]{xy}
\usepackage{amsfonts}
\usepackage{tabularx}


\newcommand{\papertitle}{$\#\exists$SAT: Projected Model Counting}
\newcommand{\authornames}{Rehan Abdul Aziz \and Geoffrey Chu \and Christian Muise \and Peter Stuckey}

\newcommand{\printsolver}[1]{\textsc{#1}\xspace}

\newcommand{\clasp}{\printsolver{clasp}}

\newcommand{\dsharp}{\printsolver{dSharp}}
\newcommand{\dsharpp}{\printsolver{dSharp\_P}}
\newcommand{\sharpsat}{\printsolver{sharpSAT}}
\newcommand{\sharpclasp}{\printsolver{\#clasp}}
\newcommand{\dtoc}{\printsolver{d2c}}
\newcommand{\minisat}{\printsolver{Minisat}}

\newcommand{\true}{\mathit{true}}
\newcommand{\false}{\mathit{false}}
\newcommand{\var}{\mathit{var}}
\newcommand{\vars}{\mathit{vars}}
\newcommand{\countt}{\mathit{ct}}
\newcommand{\size}{\mathit{size}}
\newcommand{\add}{\mathit{add}}
\newcommand{\bl}{\mathit{bl}}
\newcommand{\dec}{\mathit{dec}}
\newcommand{\litAtNode}{\mathit{litAtNode}}
\newcommand{\litWithHash}{\mathit{litWithHash}}
\newcommand{\Nodes}{\mathit{Nodes}}
\newcommand{\op}{\mathit{op}}

\newcommand{\VV}{{\cal V}}
\newcommand{\PP}{{\cal P}}
\newcommand{\NN}{{\cal N}}

\newcommand{\priority}{priority\xspace}
\newcommand{\npriority}{non-priority\xspace}

\newcommand{\solCubeRatio}{\log_2(\frac{\mathit{\#sols}}{\mathit{\#cubes}})}

\newcommand{\residual}[2]{#1|_{#2}}

\newcommand{\bif}{\textbf{if}\xspace}
\newcommand{\belse}{\textbf{else}\xspace}
\newcommand{\belseif}{\textbf{elif}\xspace}
\newcommand{\bfor}{\textbf{for}\xspace}

\newcommand{\breturn}{\textbf{return }\xspace}

\newcommand{\bbreak}{\textbf{break}\xspace}

\begin{document}

\title{\papertitle}
\author{\authornames}
\institute{National ICT Australia, Victoria Laboratory \thanks{NICTA is funded by the Australian Government as represented by the
Department of Broadband, Communications and the Digital Economy and the
Australian Research Council through the ICT Centre of Excellence program.} \\ 
Department of Computing and Information Systems \\ The University of Melbourne\\}

\maketitle

\begin{abstract}
Model counting is the task of computing the number of assignments to
variables $\VV$ that satisfy a given propositional theory $F$.
The model counting problem is denoted as \#SAT. 
Model counting is an essential tool in probabilistic reasoning.
In this paper, we introduce the problem of model 
counting projected on a 
subset of original variables that we call \emph{\priority}
variables $\PP \subseteq \VV$. 
The task is to compute the number of assignments to $\PP$ 
such that there exists an extension to \emph{\npriority} variables $\VV
\setminus \PP$ that satisfies $F$. We denote this as $\#\exists$SAT.
Projected model counting arises when some parts of the model are irrelevant
to the counts, in particular when we require additional variables to model
the problem we are counting in SAT.
We discuss three different approaches to $\#\exists$SAT (two of which are novel),
and compare their performance on different benchmark problems.
\end{abstract}

\section{Introduction}

Model counting is the task of computing the number of models of a given
propositional theory, represented as a set of clauses (SAT).
Often, instead of the original model count, we are interested in model count
projected on a set of variables $\PP$.

Given a problem on variables $\PP$, we may need to introduce
additional variables to encode the constraints on the variables $\PP$ into
Boolean clauses in the propositional theory $F$. Counting the models of
$F$ does not give the correct count if the new variables are not
\emph{functionally defined} by the original variables $\PP$. 
Thankfully, most methods of encoding constraints introduce new variables that
are functionally defined by original variables, but there are cases
where the most efficient encoding of constraints does not enjoy this
property.
%
%
Hence we should consider projected model counting for these kinds
of problems.

Alternatively, in the counting problem itself, we may only be interested in some of
the variables involved in the problem.  Unless the interesting variables
functionally define the uninteresting variables, we need projected model
counting.
An example is in \emph{evaluating} robustness of a given solution. The goal is to count the changes that can be made to a subset of variables in the solution such that it still remains a solution (possibly after allowing some repairs, e.g. in supermodels of a propositional theory \cite{supermodels}). The variables representing change are \priority variables.
In our benchmarks, we consider an example from the planning domain, 
where we are interested in robustness of a given partially ordered plan to the initial conditions, i.e.,
we want to count the number of initial states, such that the given partially ordered plan still reaches
the given goal state(s).

Projected model counting is a challenging problem that has received little
attention. 
It is in \#P\textsuperscript{NP}. If all the variables are priority variables, then it becomes
a \#SAT problem (\#P), and if all variables are \npriority variables, then it 
reduces to SAT (NP).
There has been little development of specialized algorithms 
for projected model counting in the literature. 
Some dedicated attempts at solving the problem are presented in
\cite{sharpcdcl} and 
\cite{clasp_projection}. In the latter, the primary motivation is solution
enumeration, and not counting. 
Closely related problems are
projection or \emph{forgetting} in formulas 
that are in deterministic decomposable negation normal form 
(d-DNNF~\cite{ddnnf}) \cite{kc_map}, and \emph{Boolean quantifier elimination}
\cite{existential_quantification,quantifier_elimination_auto,dds}. 

In this paper, we present three different approaches for projected model counting. 
\begin{itemize}
\item
The first technique is straight-forward and its basic idea is to modify DPLL-based model counters
to search first on the \priority variables, followed by finding only a single solution for
the remaining problem.
This technique is not novel and has been proposed in \cite{sharpcdcl}. It has also been suggested in \cite{planning_projection} in a slightly
different context.
Unlike \cite{sharpcdcl} which uses external calls to \minisat to check satisfiability
of \npriority components, we handle all computations within the solver.
\item
The second approach is a significant 
extension of the algorithm presented in \cite{clasp_projection}.
The basic idea is that every time a solution $S$ is found, we generalize it by greedily finding a subset of literals $S'$
that are sufficient to satisfy all clauses of the problem. By adding $\neg S'$ as a clause, we save an exponential amount of 
search that would visit all 
extensions of $S'$. This extension conveniently blends in the original algorithm of \cite{clasp_projection}, 
which has the property that the number of blocking clauses are polynomial in the number of \priority variables
at any time during the search.
\item 
Our third technique is a novel idea 
which reuses model counting algorithms:
computing the d-DNNF of the original problem, 
forgetting the \npriority variables in the d-DNNF,
converting the resulting DNNF to CNF, and counting the models of this CNF.
\end{itemize}
We compare these three techniques on different benchmarks
to illustrate their strengths and weaknesses.

\section{Preliminaries}
\label{sec:preliminaries}

We consider a finite set $\VV$ of propositional variables.
A literal $l$ is a variable $v \in \VV$ or its negation $\neg v$.
The negation of a literal $\neg l$ is $\neg v$ if $l = v$ or
$v$ if $l = \neg v$. 
Let $\var(l)$ represent the variable of the literal, i.e.,
$\var(v) = \var(\neg v) = v$.
A clause is a set of literals that represents their disjunction,
we shall write in parentheses $(l_1, \ldots, l_n)$.
For any formula (e.g. a clause) $C$, let $\vars(C)$ be the set of variables appearing in $C$.
A formula $F$ in conjunctive normal form (CNF) is a conjunction of clauses,
and we represent it simply as a set of clauses.
An assignment $\theta$ is a set of literals, such that if $l \in \theta$,
then $\neg l \notin \theta$. We shall write them using set notation.
Given an assignment $\theta$ then $\neg \theta$ is the clause 
$\bigvee_{l \in \theta} \neg l$.
Given an assignment $\theta$ over $\VV$ 
and set of variables $P$
then $\theta_P = \{ l ~|~ l \in
\theta, \var(l) \in P \}$

Given an assignment $\theta$, the \emph{residual} of a CNF $F$ w.r.t. $\theta$ is
written $\residual{F}{\theta}$ and is obtained by removing each clause $C$
in $F$ such that there exists a literal $l \in C \cap \theta$, and simplifying
the remaining clauses by removing all literals from them whose negation is in $\theta$.
We say that an assignment $\theta$ is a solution \emph{cube}, or simply a cube, 
of $F$ iff $\residual{F}{\theta}$
is empty. The size of a cube $\theta$, $\size(\theta)$ is equal to $2^{|\VV| - |\theta|}$.
A solution in the classical sense is a cube of size 1. The model count of $F$,
written, $\countt(F)$ is the number of solutions of $F$.

We 
consider a set of \priority variables $\PP \subseteq \VV$. Let the \npriority
variables be $\NN$, i.e., $\NN = \VV \setminus \PP$. 
Given a cube $\theta'$ of formula $F$, then
$\theta \equiv \theta'_\PP$ is a \emph{projected cube} of $F$.
The size of the projected cube is equal to $2^{|\PP| - |\theta|}$.
The projected
model count of $F$, $\countt(F, \PP)$ is equal to the number of projected cubes
of size 1. 
The projected model count can also be defined as the number of assignments
$\theta$ s.t. $\vars(\theta) = \PP$ and there exists an assignment
$\theta'$ s.t. $\vars(\theta') = \NN$ and $\theta \cup \theta'$
is a solution of $F$.

A Boolean formula is in \emph{negation normal form} (NNF) 
iff the only sub-formulas
that have negation applied to them are propositional variables.
An NNF formula is \emph{decomposable}
(DNNF) iff for all conjunctive formulae $c_1 \wedge \cdots \wedge c_n$ 
in the formula, the sets of variables of conjuncts are pairwise disjoint,
$vars(c_i) \cap var(c_j) = \emptyset, 1 \leq i \neq j \leq n$.
Finally,
a DNNF is \emph{deterministic} (d-DNNF) if for all disjunctive formulae 
$d_1 \vee \cdots \vee d_n$ in the formula,
the disjuncts are pairwise logically inconsistent,
$d_i \wedge d_j$ is unsatisfiable, $1 \leq i \neq j \leq n$.
A d-DNNF is typically represented as a tree or DAG with inner nodes and leaves being OR/AND operators
and literals respectively. 
Model counting on d-DNNF can be performed in polynomial
time (in d-DNNF size) by first computing the satisfaction
probability and then multiplying the satisfaction probability with
total number of assignments. Satisfaction probability can be computed by
evaluating the arithmetic expression that we get by replacing each literal with 0.5, $\vee$ with $+$
and $\wedge$ with $\times$ in the d-DNNF.


\section{Model Counting}

In this section we review two algorithms for model counting that are necessary for understanding
the remainder of this paper. For a more complete treatment 
of model counting algorithms,
see \cite{model_counting}.

\subsection{Solution enumeration using SAT solvers}


In traditional DPLL-algorithm~\cite{dpll}, 
once a decision literal is retracted, it is guaranteed
that all search space extending the current assignment has been exhausted. Due to this,
we can be certain that the search procedure is complete and does not miss any solution.
This is not true, however, for modern SAT solvers~\cite{moskewicz01} 
that use random restarts and First-UIP backjumping.
In the latter, the search backtracks to the last point
in search where the learned clause is asserting, and that might mean backjumping over valid solution space.
It is not trivial to infer from the current state of the solver which solutions have already
been seen and therefore, 
to prevent the search from finding an already visited solution $\theta$,
SAT solvers add the blocking clause $\neg \theta$ 
in the problem formulation as soon
as $\theta$ is found.

\subsection{DPLL-style model counting}


One of the most successful approaches for model counting 
extends the DPLL algorithm (see \cite{sumprod,cachet,sharpsat}). 
Such model counters borrow many useful features
from SAT solvers such as nogood learning, watched literals and backjumping etc to prune parts
of search that have no solution. However, they have three additional important optimizations
that make them more efficient at model counting as compared to solution enumeration using
a SAT solver. A key property of all these optimizations is that their implementation
relies on actively maintaining the residual formula during the search. This requires visiting
all clauses in the worst case at every node in the search tree. 

Say we are solving $F$ and the current assignment is $\theta$.
The first optimization in model counting is cube detection; as soon as the residual is empty, we can
stop the search and increment our model count by $\size(\theta)$. This avoids continuing the search
to visit all extensions of the cube since all of them are solutions of $F$. The second optimization
is \emph{caching} \cite{dpll_with_caching} which reuses model counts of previously encountered sub-problems instead of solving
them again as follows. Say we have computed the model count below $\theta$ and it is equal to $c$, we store $c$
against $\residual{F}{\theta}$. If, later in the search, our assignment is $\theta'$
and $\residual{F}{\theta} = \residual{F}{\theta'}$, then we can simply increment our count by $c$
by looking up the residual. The third optimization is \emph{dynamic decomposition}
and it relies on the following property of Boolean formulas: given a formula $G$, if (clauses of) $G$ can be
split into $G_1, \ldots, G_n$ such that 
$\vars(G_i) \cap \vars(G_j) = \emptyset, 1 \leq i \neq j \leq n$,
and $\bigcup_{i \in 1..n} \vars(G_i) = \vars(G)$,
then $\countt(G) = \countt(G_1) \times \ldots \times \countt(G_n)$. Model counters use this property
and split the residual into disjoint components and 
count the models of each component and multiply them to get the count of the residual. 
Furthermore, when used with caching,
the count of each component is stored against it so that if a component appears again in the search, 
then we can retrieve its count instead of computing it again.

\section{Projected Model Counting}

In this section, we present three techniques for projected model counting.

\subsection{Restricting search to \priority variables}
\label{subsec:dsharpp}

This algorithm works by slightly modifying the DPLL-based model counters as follows.
First, when solving any component, we only allow search decisions on \npriority variables if the component does not
have any \priority variables. Second, if we find a cube for a component, then the size of that cube
is equal to 2 to the power of number of \priority variables in the component. 
Finally, as soon as we find a cube for a component, we recursively mark all its \emph{parent} components
(components from earlier decision levels whose decomposition yielded the current component),
that do not have any \priority variables as solved. As a result, the count of 1 from the last
component is propagated to all parent components whose clauses are exclusively on \npriority variables.
Essentially, we store the fact that such components are \emph{satisfiable}.

\begin{example}
\label{ex:main}
Consider the 
formula $F$ with \priority variables $\PP = \{p,q,r\}$ and \npriority variables $\NN = \{x,y,z\}$.
$$
(\neg q, x, \neg p), (\neg r, \neg y, z), (r, \neg z, \neg p), (z, y, \neg p, r), (r, z, \neg y, \neg p), (p,q)
$$
Here is the trace of a possible execution using the algorithm in this
subsection. We represent a component as a pair of (unfixed) variables and residual clauses.
\\
1a. Decision $p$. The problem splits into 
$C_1 = (\{q,x\}, \{(\neg q,x)\})$ and \\
$C_2 = (\{r,y,z\}, \{(\neg r, \neg y, z), (r, \neg z), (z, y, r), (r, z, \neg y)\})$. \\
2a. We solve $C_1$ first. Decision $\neg q$. We get $C_3 = (\{x\}, \emptyset)$
and $\countt(C_3, \PP) = 1$ (trivial), we backtrack to $C_1$.
2b. Decision $q$, propagates $x$, and it is a solution. We backtrack
and set $\countt(C_1, \PP) = \countt(C_3, \PP) + 1 = 2$. \\
2c. 
Now, we solve $C_2$. Decision $r$ gives $C_4 = (\{y,z\}, \{(\neg y,z)\})$. \\
3a. Decision $z$, we get $C_5 = (\{y\}, \emptyset)$ and $\countt(C_5, \PP) = 1$.
We backtrack to level $C_2$ setting $\countt(C_4, \PP) = 1$ since the last decision
was a non-priority variable. \\
2d. Decision $\neg r$ fails 
(propagates $z$, $y$, $\neg y$). We set $\countt(C_2, \PP) = \countt(C_4, \PP) = 1$
and backtrack to root $F$ to try the other branch. \\
1b. Decision $\neg p$, propagates $q$ and gives $C_6 = (\{x\}, \emptyset)$
and $C_7 = (\{r,y,z\}, \{(\neg r,\neg y,z)\})$. 
We note that 
$\countt(C_6, \PP) = 1$ (trivial)
and move on to solve $C_7$. \\
2e. Decision $\neg r$ gives $C_8 = (\{y\}, \emptyset)$ and $C_9 = (\{z\}, \emptyset)$
with counts 1 each. We go back to $C_7$ to try the other branch. \\
2f. Decision $r$ gives $C_{10} = (\{y,z\}, \{(\neg y,z)\})$ which is the same
as $C_4$ which has the count of $1$. 
Therefore, $\countt(C_7, \PP) = \countt(C_8, \PP) \times \countt(C_9, \PP) + \countt(C_4, \PP) = 2$.
All components are solved, and there are no more choices to be tried, we go back to root to
get the final model count. 

A visualization of the search is shown in Figure~\ref{fig:model}.
The overall count is $\countt(F, \PP) = \countt(C_1, \PP) \times \countt(C_2, \PP) +
\countt(C_6, \PP) \times \countt(C_7, \PP) = 4$. \qed
\begin{figure}[t]
$$
\xymatrix@C=5mm{
&&& (\VV,F):4 \ar[drr]^{\neg p} \ar[dl]_p \\
&& \times: 2 \ar@{..}[dr]\ar@{..}[dl] &&& \times: 2 \ar@{..}[dr] \ar@{..}[dl] \\
& C_1: 2 \ar[d]^q \ar[dl]_{\neg q} && C_2: 1 \ar[d]^{\neg r} \ar[dl]_r & C_6: 1 && C_7:2
\ar[d]^r \ar[dl]_{\neg r} &&& \\
C_3:1 & (\emptyset,\emptyset): 1 & C_4:1 \ar[d]_z  & fail && \times: 1 \ar@{..}[dl] \ar@{..}[d]
& C_{10} = C_4:1  \\
&& C_5: 1 \ar@{-->}[uur] && C_8:1 & C_9:1 &&&&&
}
$$
\caption{A visualization of the search tree for model counting with priority
  variables. Nodes are marked with residual clauses and counts. 
  Dotted edges indicate dynamic decomposition, dashed edged
  indicate backjumps over non-priority decisions.\label{fig:model}}
\end{figure}
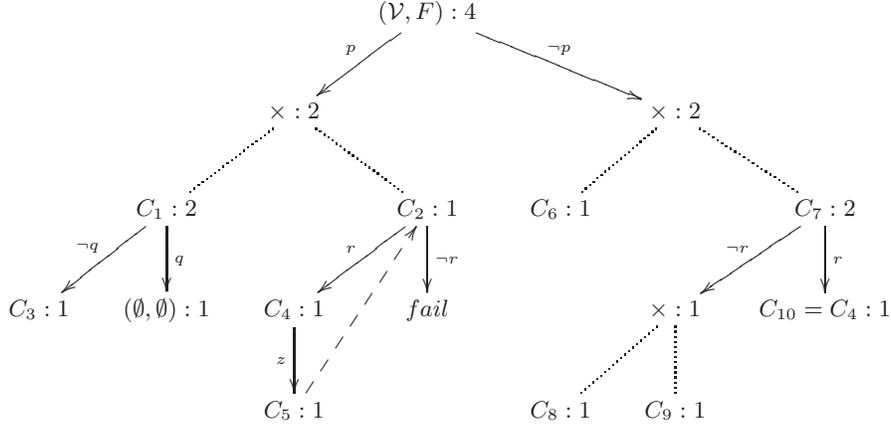
\end{example}

\subsection{Blocking seen solutions}
This approach extends the projected model counting algorithm given in \cite{clasp_projection},
which has been implemented in the ASP solver \clasp \cite{clasp,clasp_journal}.
The algorithm is originally for model enumeration, not model counting, and therefore,
it suffers in instances where there are small number of cubes, but the number of extensions of these
cubes to solutions is large.
We present a modification of the algorithm that does not have this shortcoming. But first, let us
briefly summarize the motivation behind the algorithm and its technical details.

The motivation presented in \cite{clasp_projection} is absence of any specialized
algorithm in SAT (as well as ASP) for model enumeration on a projected set of variables, and the apparent
flaws in the following two straight-forward approaches for model enumeration. The first
is essentially the approach in \ref{subsec:dsharpp} without dynamic decomposition, caching, and cube detection, i.e., to search on variables in $\PP$ first and check for a satisfying extension over $\NN$.
This interference with the search can be exponentially more expensive in the worst case,
although this approach is not compared against other methods in the experiment.
The second approach is to keep track of solutions that have been found and for each
explored solution $\theta$, add the \emph{blocking clause} 
$\neg \theta_{\PP}$ (this is also presented in \cite{sharpcdcl},
although the algorithm restarts and calls \minisat by adding the clause each time a solution is found). In the worst case, the number of solutions
can be exponential in $|\PP|$, and this approach, as experiments confirm, can quickly blow up in space.
Note that, as opposed to the learned clauses which are redundant w.r.t. the original CNF and can
be removed any time during the search, the blocking clauses need to be stored permanently, and cannot
be removed naively.

The algorithm of \cite{clasp_projection} 
runs in polynomial space and works as follows. At any given
time during its execution, the search is divided 
into \emph{controlled} and \emph{free} search. The free 
part of the search runs as an ordinary modern DPLL-based SAT solver would run with backjumping, 
conflict-analysis etc.
In the controlled part of the search, the decision literals are strictly on variables in $\PP$ and how they are chosen is described shortly.
Following the original convention, let $bl$ represent the last level of controlled search space.
Initially, it is equal to $0$. Every time a solution $\theta$ 
(with projection $\theta_P$) is found, 
the search jumps back to $bl$, 
selects a literal $x$ from $\theta_P$ 
that is unfixed (at $bl$), and forces it to be the next decision. It increments $bl$
by 1, adds the blocking clause $\neg \theta_P$ 
and most importantly, \emph{couples} 
the blocking clause with the decision $x$ in the sense that when we backtrack from $x$ and try (force) $\neg x$, $\neg \theta_P$ can be removed from memory
as it is satisfied by $\neg x$. Additionally, backtracking in the controlled region is provably designed to disallow skipping over
any solution. Therefore, when we try $\neg x$, all solutions under $x$ will have been explored. Furthermore, with $\neg x$, all subsequent blocking clauses that were added will have been satisfied since all of them include $\neg x$. This steady removal of clauses ensures that the number of blocking clauses at any given time is in $O(|\PP|)$.


We now describe how we extend the above algorithm by adding solution minimization to it.
We keep a global solution count,
initially set to 0. Once a solution $\theta$ 
is found, we generalize (minimize) the solution as
shown in the procedure \textsf{shrink} Figure~\ref{fig:shrink}. 
We start constructing the new solution cube $S$ by adding all current decisions from 1 \ldots $bl$. Then, for each clause in the 
problem ($C$ in pseudo-code) and current blocking
clauses ($B$), we intersect it with the current assignment. If the intersection contains a literal whose
variable is in $\NN$ or $S$, we skip the clause, otherwise, we add one priority literal from the intersection in $S$
(we choose one with the highest frequency in the original CNF).
After visiting all clauses,
we use $\neg S$ as a blocking clause instead of the one generated by the algorithm above ($\neg \theta_P$).
Finally, we add $2^{|\PP| - |S|}$ to the global count. The rest of the algorithm remains the same.
Note that the decision literals from the controlled part of the search are necessary
to add in the cube, since the algorithm in \cite{clasp_projection} assumes that once
a controlled decision is retracted, all the blocking clauses that were added below it
are satisfied. This could be violated by our solution minimization if
we do not add controlled decisions to $S$.

\begin{figure}[t]
\begin{tabbing}
xx \= xx \= xx \= xx \= xx \= xx \= xx \= \kill
\> \> \> \> let $p \in \theta \cap C \cap \PP$ with highest freq \= \kill\\
\textsf{shrink($\theta$)} \\
\> $S := \{\}$          \>\>\>\> \% universal solution cube         \\
\> \bfor ($i \in 1 \ldots \bl$) \\
\> \> $S.\add(\dec(i))$  \> \> \> \% add decision to $S$ \\
\> \bfor ($c \in C \cup B$) \\
\> \> $f := \false$ \\
\> \> \bfor ($l \in C$) \\
\> \> \> \bif ($l \in \theta$) and ($l \in \NN$ or $l \in S$) \\
\> \> \> \> $f := \true$ \\
\> \> \> \> \bbreak \\
\> \> \bif ($f = \false$) \> \>\> \% if nothing makes the clause true
already \\
\> \> \> let $p \in \theta \cap C : \var(p) \in \PP$ \> \> \% pick $p$ with highest freq.  \\
\> \> \> $S.\add(p)$ \> \> \% add
literal to cube \\
\> $\countt := \countt + 2^{|\PP| - |S|}$ \\
\> $B.\add(\neg S)$ \\
\end{tabbing}
\caption{Pseudo-code for shrinking a solution $\theta$ of original 
  clauses $C$ and blocking clauses $B$ to a solution cube
  $S$, adding its count and a blocking clause to prevent its
  reoccurrence.\label{fig:shrink}}
\end{figure}

\begin{example}
Consider the CNF in Example \ref{ex:main}. Initially, the controlled search part is empty, $B = \emptyset$
and $bl = 0$ as per the original algorithm. Say \clasp finds
the solution: $\{p,\neg q,x,z,r,\neg y\}$. \textsf{shrink} produces the generalized
solution: $S = \{r,p\}$ by parsing
the clauses $(r,z,\neg p)$ and $(p,q)$ 
respectively (all other clauses can be satisfied by \npriority literals).
We increment the model count by 2 ($2^{3 - |S|}$), 
store the blocking clause $\neg S = (\neg r, \neg p)$ and increment
$bl$ by 1. Say, we pick $r$, 
due to the added blocking clause, it propagates $\neg p$, which propagates $q$.
Say that \clasp now finds the solution $\{r,\neg p,q,\neg y,z,x\}$. In \textsf{shrink}, we start by including $r$
in $S$ since that is a forced decision, and then while parsing the clauses, we get $S = \{r,\neg p,q\}$.
Note that if we didn't have to include the blocking clause $(\neg r,\neg p)$, then we could get away
with $S = \{r,q\}$ which would be wrong since that shares the solution 
$\{r,q,p\}$ with the previous cube.
We increment the count to 3 and cannot force any other decision, so we try the decision $\neg r$
in the controlled part. At the same time, upon backtracking, we remove all blocking clauses
from $B$, so it is now empty. Say \clasp finds the solution $\{\neg r, \neg p, q, x, \neg y, z\}$,
\textsf{shrink} gives $S=\{\neg r,\neg p,q\}$. We increment the count to 4, and when we add $\neg S$
as a blocking clause, there are no more solutions under $\neg r$. Therefore,
our final count is 4. The visualization for this example is given in Figure \ref{fig:block}.
\qed
\end{example}

\begin{figure}[t]
$$
\xymatrix@C=5mm{
. \ar@{~>}[d] &    . \ar[d]^r_{(\neg r,\neg p)} &&&
. \ar[dl]^r_{\txt{\scriptsize$(\neg r,\neg p)$\\\scriptsize$(\neg r,p,\neg q)$}}
\ar[dr]_{\neg r} \\
\{p,\neg q,x,r,r,\neg y\}:2 & . \ar@{~>}[d] &&&& . \ar@{~>}[d] \\
& \{r,\neg p,q,\neg y,z,x\}:1 &&&& \{\neg r,\neg p,\neg q,x,\neg y,z\}:1
}
$$
\caption{A visualization of counting models via blocking solutions.
The curly arcs indicate free search, ending in a solution, with an
associated count.
The controlled search is indicated by full arcs, and blocking clauses 
associated with controlled search decisions are shown on arcs.
\label{fig:block}
}
\end{figure}
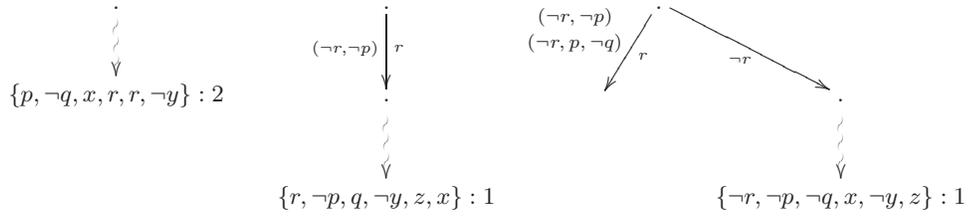

\subsection{Counting models of projected d-DNNF}

As mentioned in Section \ref{sec:preliminaries}, it is possible to do model counting on d-DNNF
in polynomial time (in the size of the d-DNNF), 
however, once we perform projection on $\PP$
(or \emph{forgetting on $\NN$} \cite{kc_map}) by replacing all literals whose variables
are in $\NN$ with $\true$, the resulting logical formula is not deterministic
anymore and model counting is no longer tractable (see \cite{kc_map}).

In this approach, we first compute the d-DNNF of $F$, then project away the literals from the d-DNNF
whose variables are in $\NN$, convert this projected DNNF back to CNF,
and then count the models of this CNF. The pseudo-code is given in Figure
\ref{fig:d2c_pseudo}. The conversion from d-DNNF to CNF is formalized
in the procedure \textsf{d2c}, which takes as its input a d-DNNF (as a list of nodes
$\Nodes$) and returns a CNF $C$. It is assumed that $\Nodes$ is topologically sorted, i.e.,
the children of all nodes appear before their parents. \textsf{d2c} maps nodes to literals in the output CNF with the
dictionary $\litAtNode$. It also maps introduced (Tseitin) variables to expressions that they represent
in a map $\litWithHash$. $v$ represents the index of the next Tseitin variable to be created.
\textsf{d2c} initializes its variables with the method \textsf{init()}. Next, it visits each
node $n$, and checks its type. If it is a literal and if it is a \npriority variable, then it
is replaced with $\true$ (projected away), otherwise, the node is simply mapped to the literal.
If $n$ is an AND or an OR node, then we get corresponding literals of its children from the method \textsf{simplify}.
We compute the hash to see if we can reuse some previous introduced variable instead of introducing
a new one. If not, then we create a new variable through the method \textsf{Tseitin} which also
posts the corresponding equivalence clauses in $C$. Finally, we post a clause that says that the literal for
the root (which is the last node) should be true. The method \textsf{simplify} essentially
maps all the children nodes to their literals. Furthermore, if one of the literals
is $\true$ and the input is an OR-node, it returns a list containing a true literal. 
For an AND node, it filters all the true literals from the children.
 
The next theorem shows that the method described in this section for projected model counting is correct.

\begin{theorem}
$\countt(C) = \countt(F, \PP)$
\end{theorem}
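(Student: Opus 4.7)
The plan is to factor the argument through two intermediate formulas and track what each transformation does to the count. Let $D$ be the d-DNNF compiled from $F$, and let $D'$ be the formula obtained from $D$ by replacing every literal whose variable lies in $\NN$ with $\true$ (the forgetting step). I would establish the chain
$$
\countt(F, \PP) \;=\; \countt(D') \;=\; \countt(C),
$$
treating each equality separately.

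The first equality rests on the semantics of \emph{forgetting} in a DNNF. By correctness of d-DNNF compilation, $D \equiv F$, so $\countt(F, \PP)$ equals the number of $\PP$-assignments that satisfy $\exists \NN.\, D$. I would then argue $D' \equiv \exists \NN.\, D$ by structural induction on $D$: at literal leaves $\exists v.\, v \equiv \exists v.\, \neg v \equiv \true$; existential quantification always distributes over $\vee$; and by decomposability, any $v \in \NN$ appears in at most one conjunct of any AND-gate of $D$, so $\exists v$ distributes over $\wedge$ as well. Iterating over all variables of $\NN$ gives $D' \equiv \exists \NN.\, D$, and hence $\countt(D') = \countt(F, \PP)$. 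Note that this is exactly the point where determinism is lost, so we cannot read off the count from $D'$ directly and must proceed via the CNF translation.

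The main obstacle is the second equality: \textsf{d2c} introduces fresh Tseitin variables, and we must show that they do not inflate the model count of the resulting $C$ relative to $D'$. The key invariant, established by induction on the topological order of $\Nodes$, is that for every node $n$ the literal $\litAtNode[n]$ is \emph{functionally defined} by the $\PP$-variables appearing in the leaves below $n$: every call to \textsf{Tseitin} posts the equivalence clauses encoding $t \Leftrightarrow \op(\ell_1,\ldots,\ell_k)$, where each $\ell_i$ is, by induction, already functionally defined by $\PP$; the literal-leaf and the $\true$-replacement cases are trivial; and the \textsf{simplify} step (dropping $\true$-conjuncts from AND-nodes, short-circuiting OR-nodes that contain $\true$) is semantics-preserving. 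It follows that for every $\PP$-assignment $\theta$ there is a \emph{unique} extension to all Tseitin variables satisfying the posted equivalence clauses, and under this extension $\litAtNode[n]$ evaluates to the value of $n$ under $\theta$ in $D'$.

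To finish, I would observe that the final clause \textsf{d2c} posts forces $\litAtNode[\mathrm{root}]$ to be $\true$, so the unique Tseitin-extension of $\theta$ satisfies $C$ iff $\theta$ satisfies $D'$. This yields a bijection between the models of $C$ and the $\PP$-models of $D'$, giving $\countt(C) = \countt(D')$. Combining the two equalities delivers $\countt(C) = \countt(F, \PP)$, as required.
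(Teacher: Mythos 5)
Your proof follows essentially the same route as the paper's sketch: the identical chain $F \equiv D \rightarrow D_{\PP} \rightarrow C$, with the Tseitin variables shown to be functionally determined by the $\PP$-variables (so that models of $C$ and of the projected DNNF are in bijection) and the \textsf{simplify}/hashing steps observed to preserve that bijection. The only substantive difference is that where you prove $D_{\PP} \equiv \exists\NN.\,D$ directly by structural induction --- using decomposability to push the existential quantifier through AND-gates --- the paper simply cites Theorem~9 and Lemma~3 of Darwiche's d-DNNF paper for that step; your inline argument is sound and makes the sketch self-contained.
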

\begin{proof}[sketch]
\newcommand{\DP}{D_{\PP}}
The entire algorithm transforms the theory from $F$ to $C$ by producing 2 auxiliary states:
the d-DNNF of $F$ (let us call it $D$) and the projection of this d-DNNF (let us call this
$\DP$). By definition, $F$ and $D$ are logically equivalent. 
On the other end, notice that the models of $\DP$ and $C$ are in one-to-one correspondence.
Although the two are not logically equivalent due to the addition of Tseitin variables, 
it can be shown that these variables do not introduce any extra model nor eliminate any existing model 
since they are simply functional definitions of variables in $\PP$ by construction
(as a side note, the only reason for introducing these variables is to efficiently encode $\DP$ as CNF, 
otherwise, $C$ and $\DP$ would be logically equivalent).
Furthermore, we can show that the simplifications (replacing $\true \vee E$ with
$\true$ and $\true \wedge E$ with $E$) in the procedure \textsf{simplify},
and reusing Tseitin variables (through hashing) also do not affect the bijection.
This just leaves us with the task of establishing bijection between the models of $D$ and $\DP$,
which, fortunately, has already been done in \cite{ddnnf}. Theorem 9 in the paper says
that replacing \npriority literals with true literals in a d-DNNF is a proper projection
operation, and Lemma 3 establishes logical equivalence between $D$ and $\DP$ modulo variables in $\PP$.
\end{proof}

\begin{figure}[t]
\scriptsize
\raggedright
\begin{minipage}{0.4\textwidth}
\begin{tabbing}
xx \= xx \= xx \= xx \= xx \= xx \= xx \= \kill
\textsf{d2c($\Nodes$)} \\
\> \textsf{init()} \\
\> \bfor ($n \in \Nodes$) \\
\> \> \bif($n$ is a literal $l$) \\
\> \> \> \bif($\var(l) \in \NN$) \\
\> \> \> \> $\litAtNode[n] := \true$ \\
\> \> \> \belse $\litAtNode[n] := l$ \\
\> \> \belseif($n = \op(c_1, \ldots, c_k)$) \\
\> \> \> $(l_1, \ldots, l_j) :=$ \textsf{simplify($n$)}\\
\> \> \> \bif ($j = 1$) \\
\> \> \> \> $\litAtNode[n] := l_1$ \\
\> \> \> \belse \\
\> \> \> \> $h :=$ \textsf{hash$(\op, (l_1, \ldots, l_j))$} \\
\> \> \> \> \bif ($\litWithHash.\textsf{hasKey}(h)$) \\
\> \> \> \> \> $\litAtNode[n] := \litWithHash[h]$ \\
\> \> \> \> \belse \\
\> \> \> \> \> $v' := $ \textsf{Tseitin$(\op, (l_1, \ldots, l_j))$} \\
\> \> \> \> \> $\litAtNode[n] := v'$ \\
\> \> \> \> \> $\litWithHash[h] := v'$ \\
\> $C.\add(\{\litAtNode[\Nodes.\textsf{last()}]\})$ \\
\> \breturn $C$
\end{tabbing}
\end{minipage}\qquad
\begin{minipage}{0.4\textwidth}
\begin{tabbing}
xx \= xx \= xx \= xx \= xx \= xx \= xx \= \kill
\textsf{init()} \\
\> $C = ()$, $\litAtNode = \{\}$, $\litWithHash = \{\}$ \\
\> $v := |\VV|$
\\ \\
\textsf{simplify($\op(c_1, \ldots, c_k)$)} \\
\> $L = ()$ \\
\> \bfor ($c \in c_1, \ldots, c_k$) \\
\> \> \bif ($\litAtNode[c] = \true$) \\
\> \> \> \bif ($op =$ OR) \breturn $(\true)$ \\
\> \> \belse \\
\> \> \> $L.\add(\litAtNode[c])$ \\
\> \breturn $L$
\\ \\
\textsf{Tseitin$(\op, (l_1, \ldots, l_j))$} \\
\> Add clauses $v \Leftrightarrow \op(l_1, \ldots, l_j)$ in $C$ \\
\> $v := v + 1$ \\
\> \breturn $v - 1$
\end{tabbing}
\end{minipage}
\caption{Pseudo-code for projected model counting
via counting models of CNF encoding of projected DNNF.\label{fig:d2c_pseudo}}
\end{figure}

\newcommand*{\Scale}[2][4]{\scalebox{#1}{$#2$}}%

\begin{example}
Consider the formula $F$ with \priority variables $p,q$ and \npriority variables $x,y,z$:
$$
(\neg x,p), (q, \neg x, y), (\neg p,\neg y,\neg z,q), (x,q), (\neg q,p)
$$
The projected model count is 2 ($(p,q)$ and $(p,\neg q)$).
\begin{figure}[h]
        \centering
        \begin{subfigure}{0.3\textwidth}
                $
								\Scale[0.6]{
								\xymatrix{
&& \vee \ar[dl]\ar[dr] \\
									&
                                                                        \wedge
                                                                        \ar[dl]
                                                                        \ar[d]
                                                                        \ar[dr]
                                                                        & &
                                                                        \wedge
                                                                        \ar[d]
                                                                        \ar[ddl]
                                                                        \ar[dr]
                                                                        \\
									x &
                                                                        p &
                                                                        \vee
                                                                        \ar[dl]
                                                                        \ar[d]
                                                                        &
                                                                        \neg
                                                                        x & p \\
									& \wedge \ar[dl] \ar[d] \ar[dr] & q & \\
									\neg q & y & \neg z \\
								}
								}
								$
                \caption{d-DNNF}
                \label{fig:ddnnf}
        \end{subfigure}%
        \begin{subfigure}{0.3\textwidth}
                $
								\Scale[0.6]{
								\xymatrix{
& \stackrel{a_4}{\vee} \ar[dl]\ar[dr] \\
\stackrel{a_2}{\wedge}  \ar[d]   \ar[dr]
                 &  &
\stackrel{a_3}{\wedge} \ar[d] \ar[ddl] \\
		p &
\stackrel{a_1}{\vee} \ar[dl] \ar[d] & p \\
									\neg q & q & \\
								}
								}
								$
								\caption{Projected d-DNNF}
                \label{fig:projected_ddnnf}
        \end{subfigure}
				\begin{subfigure}{0.3\textwidth}
								$$
								\begin{array}{l}
								a_1 \Leftrightarrow q \vee \neg q \\
								a_2 \Leftrightarrow a_1 \wedge p \\
								a_3 \Leftrightarrow p \wedge q \\
								a_4 \Leftrightarrow a_2 \vee a_3 \\
								a_4
								\end{array}
								$$
                \caption{Formula from d2c}
                \label{fig:tseitin}
        \end{subfigure}
				
        \caption{Example of application of d2c}
				\label{fig:d2c}
\end{figure}
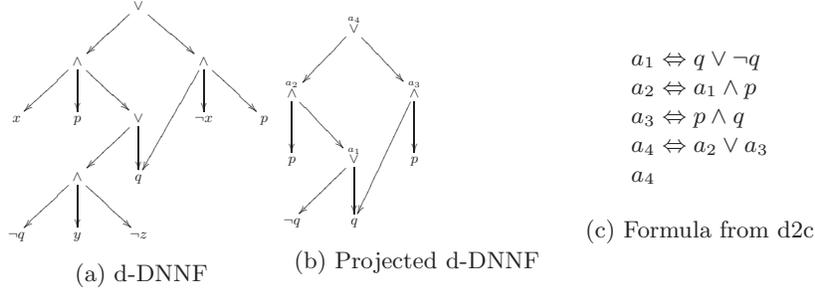

Figure \ref{fig:d2c} shows the initial d-DNNF (\ref{fig:ddnnf}), the DNNF obtained
by replacing all \npriority literals by $\true$ and simplifying (\ref{fig:projected_ddnnf})
and the d2c translation of the projected DNNF (\ref{fig:tseitin}).
Notice that if we perform model counting naively on the projected DNNF, we get a count
of 3 since we double count the model $(p,q)$. 
The satisfaction probability is: 
$$ 
(\frac{1}{2} + \frac{1}{2}) \times \frac{1}{2} + (\frac{1}{2} \times \frac{1}{2}) = \frac{3}{4}
$$
From satisfaction probability, we get the wrong model count $2^2 \times \frac{3}{4} = 3$.
However, if we count the models of the translated formula in
(\ref{fig:tseitin}), we get the correct count of 2.
\qed
\end{example}


\section{Experiments}

We compare the following solvers on various benchmarks: \clasp in its projection mode (\clasp),
our extension of clasp with cube minimization (\sharpclasp),
model counting with searching on priority variables first (\dsharpp), and counting models
of projected DNNF (\dtoc). 
%
%
In each row of the following tables, $|\PP|$ is the number of priority variables.
$T$ and $D$ represent the execution time and number of decisions taken by the solver.
$R$ is a parameter to gauge the quality of cubes computed by \sharpclasp, the higher it
is, the better. It is equal to $\solCubeRatio$. A value of 0 indicates that all solution
cubes computed have size 1, while the maximum value is equal to the number of \priority variables,
which is the unique case when there is only one cube and every assignments to \priority variables
is a solution. $R$ essentially quantifies the advantage over enumeration, the less constrained
a problem is, and the more general the cubes are, the higher the advantage.
%
%
$S$ is the size (in bytes) of the CNF computed by \dtoc
that is subsequently given to the solver \sharpsat for model counting.
The timeout for all experiments is 10 minutes. All times are shown in seconds.
The experiments were run on NICTA's HPC cluster.
\footnote{All benchmarks and solvers are available at:
\url{http://people.eng.unimelb.edu.au/pstuckey/countexists}}

\subsection{Uniform random 3-SAT and Boolean circuits}

Table \ref{table:uf_circuit} shows the results from uniform random 3-SAT and random Boolean circuits. 
In this table, for each problem instance, we show how the solvers perform as we increase the number
of priority variables. A ``\ldots'' after a row means that every solver either ran out of time or memory
for all subsequent number of priority variables until the next one shown.
For each instance, a row is added that provides the following information about it:
name, number of solutions as reported by \dsharp, number of variables and clauses and time and decisions taken by \dsharp. 
Note that this time should be added to the time of \dtoc in order to get the actual time of \dtoc approach.

Let us look at the results form uniform random 3-SAT. All instances have 100 variables,
and the number of clauses is varied. We try clause-to-variable ratios of 1, 1.5, 2, 3 and 4.
Note that for model counting, the difficulty peaks at the ratio of approximately 1.5 \cite{model_counting}.
For the first 3 instances, \sharpclasp is the clear winner while \clasp also does well, \dsharpp
lags behind both, and \dtoc does not even work since the original instance cannot be solved by \dsharp.
For \sharpclasp, as we increase the number of clauses, the cube quality decreases due to
the problem becoming more constrained and cube minimization becoming less effective.
For 300 clauses, we see a significant factor coming into play for \dsharpp. The original instance is solved
by \dsharp. As we increase the number of priority variables until nearly the middle, the performance
of \dsharpp degrades, but after 50 \priority variables, it starts getting better. This is because the degradation
due to searching on \priority variables first becomes less significant and the search
starts working more naturally in its VSADS mode \cite{vsads}. \dtoc also solves two rows
in this instance but is still largely crippled as compared to other solvers.
Finally, with 400 clauses, we are well past the peak difficulty and the number of models is small enough
to be enumerated efficiently by \clasp. All solvers finish all rows of this instance in less than .15 seconds.
We tried the same ratios for 200 and 300 variables. For 200 variables, we saw the same trend, although
the problem overall becomes harder and the number of solved rows decreases. For 300 variables, the problem
becomes significantly harder to be considered a suitable benchmark.

The Boolean circuits are generated with $n$ variables as follows: we keep a set initialized
with the $n$ original variables, then as long as the set is not a singleton, we randomly pick an operator $o$
(AND, OR, NOT), remove random operands $V$ from the set, create a new variable $v$ and post the constraints
$v \leftrightarrow o(V)$ and put $v$ back in the set. The process is repeated $c$ times.
In the table, we show the results where $n$ is 30, and $c$ is 1,5,10. Note that a higher value
of $c$ means that the problem is more constrained. Overall, for all instances, \dsharpp is the superior
approach, followed by \clasp; and \dtoc is better than \sharpclasp in $c=1$ but the converse
is true for higher values of $c$. All solvers find $c=5$ to be the most difficult instance.
We saw similar trends for different values of $n$ that have appropriate hardness with same values of $c$.


\begin{center}
\scriptsize
\begin{longtable}{rr|rr|rrr|rr|rrr}
\hline
 & & \multicolumn{2}{c|}{\clasp} & \multicolumn{3}{c|}{\sharpclasp} &  \multicolumn{2}{c|}{\dsharpp} & \multicolumn{3}{c}{\dtoc} \\ [0.2ex]
$|\PP|$ & $\#$ & $T$ & $D$ & $T$ & $D$ & $R$ & $T$ & $D$ & $T$ & $D$ & $S$ \\ [0.2ex]

\hline \\
& & \multicolumn{10}{c}{UF $\#$=--- $|V|$=100 $|C|$=100 $T$=--- $D$=---} \\ 
 \hline
5 & 32 & 0 & 2271 & 0 & 291 & 3.00 & .04 & 1150 & --- & --- & --- \\[0.2ex]
10 & 1024 & .01 & 71309 & 0 & 533 & 7.19 & .99 & 35927 & --- & --- & --- \\[0.2ex]
15 & 32768 & .40 & 2023146 & 0 & 1888 & 10.30 & 7.92 & 370034 & --- & --- & --- \\[0.2ex]
25 & 2.7e+07 & 345.57 & 1.4e+09 & 0 & 10584 & 17.36 & --- & --- & --- & --- & --- \\[0.2ex]
35 & 1.8e+10 & --- & --- & .02 & 62016 & 24.32 & --- & --- & --- & --- & --- \\[0.2ex]
50 & 1.9e+14 & --- & --- & 107.75 & 7.1e+07 & 27.04 & --- & --- & --- & --- & --- \\[0.2ex]
$\ldots$ \\[0.2ex]
\hline \\
& & \multicolumn{10}{c}{UF $\#$=--- $|V|$=100 $|C|$=150 $T$=--- $D$=---} \\ 
 \hline
5 & 32 & 0 & 1937 & 0 & 247 & 3.00 & .03 & 1286 & --- & --- & --- \\[0.2ex]
10 & 1024 & .02 & 54077 & .01 & 2933 & 4.19 & .68 & 27142 & --- & --- & --- \\[0.2ex]
15 & 32768 & .42 & 1767073 & 0 & 2101 & 9.96 & 31.63 & 1057551 & --- & --- & --- \\[0.2ex]
25 & 2.1e+07 & 270.91 & 8.7e+08 & .34 & 393130 & 11.25 & --- & --- & --- & --- & --- \\[0.2ex]
35 & 2.8e+09 & --- & --- & 24.98 & 1.5e+07 & 12.84 & --- & --- & --- & --- & --- \\[0.2ex]
$\ldots$ \\[0.2ex]
\hline \\
& & \multicolumn{10}{c}{UF $\#$=--- $|V|$=100 $|C|$=200 $T$=--- $D$=---} \\ 
 \hline
5 & 32 & 0 & 1354 & 0 & 259 & 2.42 & .04 & 1304 & --- & --- & --- \\[0.2ex]
10 & 1024 & .01 & 47771 & 0 & 1370 & 5.25 & 1.12 & 37596 & --- & --- & --- \\[0.2ex]
15 & 30712 & .43 & 1408296 & 0 & 4659 & 8.29 & 37.07 & 874826 & --- & --- & --- \\[0.2ex]
25 & 1.8e+07 & 218.20 & 6.4e+08 & 1.69 & 1801261 & 8.54 & --- & --- & --- & --- & --- \\[0.2ex]
$\ldots$ \\[0.2ex]
\hline \\
& & \multicolumn{10}{c}{UF $\#$=2.603e+11 $|V|$=100 $|C|$=300 $T$=31.44 $D$=571163} \\ 
 \hline
5 & 32 & 0 & 986 & 0 & 646 & 0.75 & .05 & 671 & 40.92 & 31 & 865K \\[0.2ex]
10 & 970 & .02 & 25441 & .02 & 11450 & 1.20 & 2.07 & 14146 & 102.63 & 969 & 1.6M \\[0.2ex]
15 & 12990 & .22 & 290973 & .11 & 61663 & 2.30 & 12.74 & 144211 & --- & --- & 6.7M \\[0.2ex]
25 & 226117 & 3.84 & 3432170 & 1.66 & 464908 & 3.21 & 57.57 & 808670 & --- & --- & 15M \\[0.2ex]
35 & 5126190 & 49.02 & 6.6e+07 & 15.65 & 3367386 & 4.67 & 161.67 & 2834211 & --- & --- & 38M \\[0.2ex]
50 & --- & --- & --- & --- & --- & --- & --- & --- & --- & --- & 61M \\[0.2ex]
65 & 1.6e+09 & --- & --- & --- & --- & --- & 70.80 & 1552565 & --- & --- & 89M \\[0.2ex]
75 & 2.0e+10 & --- & --- & --- & --- & --- & 70.74 & 1330586 & --- & --- & 104M \\[0.2ex]
85 & 2.9e+10 & --- & --- & --- & --- & --- & 50.18 & 780597 & --- & --- & 113M \\[0.2ex]
100 & 2.6e+11 & --- & --- & --- & --- & --- & 28.62 & 571163 & --- & --- & 134M \\[0.2ex]
\hline \\
& & \multicolumn{10}{c}{UF $\#$=45868 $|V|$=100 $|C|$=400 $T$=.05 $D$=244} \\ 
 \hline
5 & 7 & 0 & 1078 & 0 & 907 & 0.49 & .08 & 219 & .01 & 6 & 549 \\[0.2ex]
10 & 25 & 0 & 1308 & 0 & 1103 & 0.94 & .14 & 322 & .01 & 17 & 1.3K \\[0.2ex]
15 & 32 & 0 & 1582 & 0 & 1242 & 1.09 & .12 & 376 & .01 & 21 & 3.1K \\[0.2ex]
25 & 105 & .01 & 2242 & 0 & 1290 & 2.32 & .09 & 373 & .01 & 34 & 3.4K \\[0.2ex]
35 & 246 & .01 & 3068 & 0 & 1338 & 2.52 & .06 & 363 & .01 & 107 & 8.6K \\[0.2ex]
50 & 952 & .01 & 6737 & .01 & 2241 & 3.24 & .05 & 361 & .02 & 202 & 14K \\[0.2ex]
65 & 3417 & .01 & 16388 & .01 & 2889 & 4.41 & .05 & 262 & .09 & 321 & 21K \\[0.2ex]
75 & 7964 & .04 & 26979 & .02 & 2845 & 5.32 & .05 & 250 & .04 & 426 & 22K \\[0.2ex]
85 & 13274 & .04 & 36445 & .02 & 3993 & 5.18 & .05 & 237 & .06 & 563 & 26K \\[0.2ex]
100 & 45868 & .11 & 46623 & .03 & 4639 & 6.74 & .04 & 244 & .07 & 688 & 31K \\[0.2ex]
\hline \\
& & \multicolumn{10}{c}{$n$=30 $c$=1 $\#$=9.657e+08 $|V|$=99 $|C|$=167 $T$=0 $D$=111} \\ 
 \hline
5 & 16 & 0 & 409 & 0 & 292 & 0.54 & 0 & 113 & .01 & 4 & 1.1K \\[0.2ex]
9 & 160 & 0 & 3418 & 0 & 1184 & 1.54 & 0 & 143 & 0 & 8 & 1.2K \\[0.2ex]
14 & 552 & 0 & 9305 & 0 & 5030 & 1.00 & 0 & 82 & .01 & 22 & 3.8K \\[0.2ex]
24 & 248960 & 1.16 & 2718019 & 1.49 & 833682 & 2.07 & 0 & 130 & .01 & 169 & 6.1K \\[0.2ex]
34 & 1621760 & 6.13 & 1.2e+07 & 6.45 & 1999088 & 3.13 & .01 & 111 & .05 & 656 & 9.7K \\[0.2ex]
49 & 3.9e+07 & 104.26 & 1.9e+08 & 353.25 & 1.4e+07 & 4.78 & .01 & 162 & .10 & 1393 & 14K \\[0.2ex]
64 & 1.5e+08 & 394.21 & 4.6e+08 & --- & --- & --- & 0 & 129 & .18 & 2143 & 18K \\[0.2ex]
74 & 4.4e+08 & --- & --- & --- & --- & --- & .01 & 108 & .21 & 2982 & 20K \\[0.2ex]
84 & 7.2e+08 & --- & --- & --- & --- & --- & 0 & 99 & .20 & 2624 & 24K \\[0.2ex]
99 & 9.7e+08 & --- & --- & --- & --- & --- & .01 & 111 & .20 & 2517 & 27K \\[0.2ex]
\hline \\
& & \multicolumn{10}{c}{$n$=30 $c$=5 $\#$=9.426e+07 $|V|$=389 $|C|$=867 $T$=288.45 $D$=1036363} \\ 
 \hline
19 & 12192 & .16 & 155331 & .75 & 146058 & 0.00 & 16.48 & 120619 & --- & --- & 5.8M \\[0.2ex]
38 & 208716 & 2.57 & 1882991 & 15.23 & 1985136 & 0.00 & 95.37 & 834705 & --- & --- & 47M \\[0.2ex]
58 & 1.2e+07 & 93.69 & 3.7e+07 & --- & --- & --- & --- & --- & --- & --- & 100M \\[0.2ex]
97 & 3.3e+07 & 248.76 & 6.9e+07 & --- & --- & --- & 427.85 & 1509308 & --- & --- & 171M \\[0.2ex]
136 & 6.1e+07 & 428.89 & 9.1e+07 & --- & --- & --- & --- & --- & --- & --- & 252M \\[0.2ex]
... &&&&&&&&&&& \\[0.2ex]
291 & 9.3e+07 & --- & --- & --- & --- & --- & 300.78 & 985065 & --- & --- & 574M \\[0.2ex]
330 & 9.4e+07 & --- & --- & --- & --- & --- & 299.02 & 1074927 & --- & --- & 672M \\[0.2ex]
389 & 9.4e+07 & --- & --- & --- & --- & --- & 308.84 & 1036363 & --- & --- & 783M \\[0.2ex]
\hline \\
& & \multicolumn{10}{c}{$n$=30 $c$=10 $\#$=5066 $|V|$=766 $|C|$=1771 $T$=.32 $D$=1400} \\ 
 \hline
38 & 282 & .01 & 1196 & .03 & 1797 & 0.00 & .31 & 1412 & .08 & 256 & 36K \\[0.2ex]
76 & 1618 & .02 & 3479 & .12 & 5434 & 0.00 & .40 & 1600 & .81 & 2046 & 137K \\[0.2ex]
114 & 2581 & .03 & 4984 & .21 & 7953 & 0.00 & .52 & 1787 & 1.69 & 3702 & 173K \\[0.2ex]
191 & 4948 & .05 & 5558 & .52 & 12243 & 0.00 & .54 & 1904 & 4.98 & 7519 & 330K \\[0.2ex]
268 & 5066 & .07 & 5458 & .63 & 12235 & 0.00 & .38 & 1784 & 6.69 & 10508 & 478K \\[0.2ex]
383 & 5066 & .09 & 5513 & .85 & 12356 & 0.00 & .69 & 1975 & 9.27 & 12528 & 698K \\[0.2ex]
497 & 5066 & .08 & 5253 & 1.47 & 12471 & 0.00 & .39 & 1680 & 12.46 & 11818 & 1.1M \\[0.2ex]
574 & 5066 & .11 & 5211 & 1.68 & 12358 & 0.00 & .52 & 1616 & 14.85 & 11911 & 1.1M \\[0.2ex]
651 & 5066 & .09 & 5500 & 1.21 & 12546 & 0.00 & .36 & 1500 & 13.42 & 11807 & 1.4M \\[0.2ex]
766 & 5066 & .09 & 5072 & 2.06 & 12389 & 0.00 & .33 & 1400 & 21.61 & 11644 & 1.6M \\[0.2ex]
\hline \\
\caption{Results from random uniform 3-SAT and Boolean circuits}
\label{table:uf_circuit}
\end{longtable}
\end{center}

\subsection{Planning}

\newcommand{\finished}{\checkmark}

Table \ref{table:planning} summarizes performance of different projected model counting
algorithms on checking robustness of partially ordered plans to initial conditions.
We take five planning benchmarks: depots, driver, rovers, logistics, and storage. For each
benchmark, we have two variants, 
one with the goal state fixed and one where the goal is relaxed to be any viable goal
(shown with a capital A in the table representing \emph{any} goal).
For the two variants, the \priority variables are defined such that by doing
projected model counting, we count the following.
For the first problem, we count the number of initial states the given plan can achieve the given goal from.
For the second problem, we count the number of initial states plus all goal configurations that the given plan works for.
Each row in the table represents the summary of 10 instances of same size. The first 3 columns
show the instance parameters.
For each solver, \finished shows how many instance the solver was able to finish within
time and memory limits. All other solver parameters are averages over finished instances.
Another difference from the previous table is that we have added the execution time
of \dsharp in \dtoc and \dsharp time is shown in parenthesis. 
There was no case in which only \dsharp finished and the remaining steps of \dtoc
did not finish. 

Overall, \dsharpp solves the most instances (42), followed by \sharpclasp (41), 
\dtoc (34), and finally \clasp which solves only 4 instances from the storage benchmark,
and otherwise suffers due to the inability to detect cubes.
\dsharpp and \dtoc only fail on all instances in 2 benchmarks while \sharpclasp
fails in 4, so they are more robust in that sense.
For \dtoc, the running time is largely taken by producing the d-DNNF and
the second round of model counting is relatively cheaper.
The cube quality of \sharpclasp is quite significant for all instances
that it solves.

\begin{center}
\scriptsize
\begin{longtable}{rrrr|lrr|lrrr|lrr|lrrr}
\hline
\multicolumn{4}{c|}{Instance} & \multicolumn{3}{c|}{\clasp} & \multicolumn{4}{c|}{\sharpclasp} &  \multicolumn{3}{c|}{\dsharpp} & \multicolumn{4}{c}{\dtoc} \\ [0.2ex]

Name & $|\VV|$ & $|C|$ & $|\PP|$ &
\finished & $T$ & $D$ & 
\finished & $T$ & $D$ & $R$ & 
\finished & $T$ & $D$ & 
\finished & $T$ & $D$ & $S$ \\ [0.2ex] 
\hline \\

depotsA            & 9402 & 211901 & 224 & 0 & --- & --- & 0 & --- & --- & --- & 1 & 24.82 & 92206 & 1 & 8.34 (6.98) & 1813 & 154K \\ [0.2ex]
depots    & 9211 & 211796 & 111.8 & 0 &	--- &	--- & 2 & 4.16 & 4.43e+6 &	31.54 & 1 & 24.74 &	91909 & 1 &	7.71 (6.67) & 1642 & 149K \\[0.2ex]
driverA            & 2068	& 12798 & 135 & 0 & --- &	--- &	0 &	--- &	--- &	--- & 5 & 36.01 & 27104.8	& 3 & 164.73 (161.42) & 68.33 & 150.5K \\ [0.2ex]
driver    & 1999 & 12700 & 68 &	0 & -- &	--- & 10 & 0.31 & 1.23e+5 & 51.7 &	5 &	15.8 & 1.45e+4 & 3 & 118.67 (116.00) & 29.3 & 109K \\ [0.2ex]
logisticsA         & 18972 & 324568 & 447 & 0 & --- & --- & 0 & --- & --- & --- & 0 & --- & --- & 0 & --- & --- & --- \\ [0.2ex]
logistics & 18702 & 324352 & 224 & 0 & --- & --- & 6 & 33.52 & 1.81e6 & 165.09 & 0 & --- & --- & 0 & --- & --- & --- \\ [0.2ex]
roversA            & 3988 & 27634 &	209 & 0 & ---	& --- &	0 & --- & --- &	--- & 5 & 69.92 & 51965 & 3 & 1.11 (1.06) & 53.33 & 5.37K \\ [0.2ex]
rovers    & 3851 & 27535 & 104 & 0 &	--- &	--- & 10 & 0.30 & 36769.7 &	88.16 & 5 &	76.26 &	52245.4 & 3 & 1.04 (1.01) & 12.67 & 3.4K \\ [0.2ex]
storageA & 915 & 3465 & 93 & 1& 454.2 & 2.5e9 & 3 & 43.81 & 3.89e7 & 18.01 & 10 & 49.04 &	47112.50 & 9 & 103.35 (78.60) & 1964.2 & 440.21K \\ [0.2ex]
storage    & 851 & 3420 & 47 & 3 & 15.05 & 7.87e7 &	10 & 0.05 & 30686 & 30.47 &	10 & 15.48 & 12444 & 9 & 57.1 (53.46) & 625.67 & 254.58K \\ [0.2ex]
\caption{Results from robustness of partially ordered plans to initial conditions.}
\label{table:planning}
\end{longtable}
\end{center}

\section{Related Work and Conclusion}


The area of Boolean Quantifier Elimination (BQE) seems closely related to projected model
counting. Although the goal in BQE is to produce a \npriority variables free representation (usually CNF),
some algorithms can also be adapted for projected model counting. 
Of particular interest are techniques in \cite{existential_quantification} and \cite{dds}.
The algorithm of \cite{existential_quantification} finds cubes in decreasing (increasing) order of cube size
(number of literals in cube).
While this approach does not require
cube minimization, it does not run in polynomial space, and 
if a problem only has large cubes, then significant time might be wasted
searching for smaller ones. 
The second interesting approach, with promising results, is given in \cite{dds}, which uses a DPLL-style
search and also decomposes the program at each step like the approach described in \ref{subsec:dsharpp}.
A good direction for future work is to investigate how well these techniques
lend themselves to projected model counting and whether there is any room
for integration with the ideas presented in this paper.

In this paper we compare four algorithms for projected model counting. 
We see that each algorithm can be superior in appropriate
circumstances: 
\begin{itemize}
\item When the number of solutions is small then 
\clasp~\cite{clasp_projection} is usually the best.
\item When the number of solution cubes is much smaller than
solutions, and there is not much scope for component caching, then
\sharpclasp is the best.
\item When component caching and dynamic decomposition are useful then
\dsharpp is the best.
\item Although \dtoc is competitive, it rarely outperforms both \sharpclasp and \dsharpp.
	Having said that, \dtoc approach has another important aspect besides
	projected model counting. It is a method to perform projection
	on a d-DNNF without losing determinism. This can be done by
	computing the d-DNNF of the CNF produced
	by the \textsf{d2c} procedure (instead of model counting), and then simply forgetting
	the Tseitin variables (replacing with $\true$). It can be shown that
	this operation preserves determinism. Furthermore, our experiments show
	that the last model counting step takes comparable time to computing
	the first d-DNNF in most cases (and in many cases, takes significantly less time),
	which means that the approach is an efficient way of performing
	projection on a d-DNNF.	
	
	While we use d-DNNF for \dtoc approach, it is possible to use other, less succinct, 
	languages like Ordered Binary Decision Diagrams (OBDDs). 
	We leave the comparison with other possible knowledge compilation-based approaches for 
	projected model counting as future work.
\end{itemize}
As the problem of projected model counting is not heavily explored,
there is significant scope for improving 
algorithms for it.
A simple improvement would be to portfolio approach to solving the problem,
combining all four of the algorithms, to get something close
to the best of each of them.

\bibliographystyle{splncs03}
\bibliography{paper}

\end{document}